\documentclass{article}

\usepackage[dblblindworkshop,final]{neurips_2026}
\workshoptitle{SLMs for Agentic Systems, Paris, France, 2026}

\usepackage[utf8]{inputenc}
\usepackage[T1]{fontenc}
\usepackage{hyperref}
\hypersetup{colorlinks=true, linkcolor=red, citecolor=blue, urlcolor=magenta}
\usepackage{url}
\usepackage{booktabs}
\usepackage{amsmath,amssymb,amsfonts,amsthm}
\usepackage{graphicx}
\usepackage{multirow}
\usepackage{enumitem}
\usepackage{algorithm}
\usepackage{algpseudocode}
\usepackage{xcolor}

\newcommand{\frontierref}{Figure~\ref{fig:frontiers}}
\newcommand{\figonewidth}{\linewidth}

\newtheorem{theorem}{Theorem}
\newtheorem{proposition}{Proposition}
\newtheorem{definition}{Definition}
\newtheorem{corollary}{Corollary}
\newtheorem{lemma}{Lemma}

\newcommand{\safeincludegraphics}[2][]{%
  \IfFileExists{#2}{\includegraphics[#1]{#2}}{%
    \fbox{\begin{minipage}[c][0.18\textheight][c]{0.88\linewidth}\centering Missing figure asset:\\ \texttt{\detokenize{#2}}\end{minipage}}%
  }%
}
\newcommand{\carb}{\textsc{CARB}}
\newcommand{\gcarb}{\textsc{G-CARB}}
\newcommand{\E}{\mathbb{E}}

\newcommand{\lstop}{\lambda_{\bot}}

\newcommand{\Lunion}{\ell_{\mathrm{union}}}

\title{G-CARB: Graph-Localized Conformal Agent Risk Budget for Compositional Harm}

\author{%
  \textbf{Zijun Yu}\textsuperscript{*}\\
  \normalfont Department of Mathematics and Statistics\\
  \normalfont McGill University\\
  \normalfont Montr\'eal, QC, Canada\\
  \texttt{zijun.yu@mail.mcgill.ca}
  \And
  \textbf{Yu Gu}\textsuperscript{*}\\
  \normalfont Department of Mathematics and Statistics\\
  \normalfont McGill University\\
  \normalfont Montr\'eal, QC, Canada\\
  \texttt{yu.gu4@mail.mcgill.ca}
  \And
  \textbf{Vahid Partovi Nia}\\
  \normalfont Department of Mathematics and\\
  \normalfont Industrial Engineering\\
  \normalfont Polytechnique de Montr\'eal\\
  \normalfont Montr\'eal, QC, Canada\\
  \texttt{vahid.partovinia@polymtl.ca}
  \And
  \textbf{Masoud Asgharian}\\
  \normalfont Department of Mathematics and Statistics\\
  \normalfont McGill University\\
  \normalfont Montr\'eal, QC, Canada\\
  \texttt{masoud.asgharian2@mcgill.ca}
}

\begin{document}
\maketitle
\begingroup
\renewcommand{\thefootnote}{\fnsymbol{footnote}}
\footnotetext[1]{Equal contribution.}
\endgroup

\begin{abstract}
Small language model (SLM) agents need safety controls that track consequences across tool calls with little monitoring overhead. A private read, for example, becomes a leak when a later action sends that data outside the system. We introduce CARB (Conformal Agent Risk Budget), which calibrates when to stop an agent using a ledger of harm incurred before stopping. Under exchangeable episodes, standard conformal risk control bounds this declared loss in expectation over calibration and a future episode. G-CARB selects scorer evidence along observable dependencies from private sources to outgoing actions. The ledger still covers the entire executed history, and computing the gate score requires no additional language-model inference. On AgentDojo replay with two 14B backbones, G-CARB roughly halves scorer-input records at intermediate risk budgets while improving autonomous task completion relative to full-prefix scoring; random context of the same size achieves similar gains. Controlled examples show how retaining the relevant dependency can further avoid stopping benign work.
\end{abstract}

\section{Introduction}

An agent reads a private certificate, copies a field into a note, and later emails that note outside the organization. Together, these actions leak the private field. Preventing the leak requires connecting the proposed send to the earlier read. Tool-using language models routinely act across such multi-step trajectories \citep{yao2022react,schick2023toolformer,zhou2023webarena,deng2023mind2web}. Prompt-injection studies, agent-safety benchmarks, and workflow audits expose the risks of these linked actions \citep{greshake2023injection,debenedetti2024agentdojo,zhan2024injecagent,andriushchenko2024agentharm,wu2026chain}.

For small language model (SLM) agents, the safeguard should also be inexpensive. We use an external gate: the agent proposes a tool call, a lightweight scorer evaluates the available evidence, and a calibrated threshold determines whether execution continues. The gate tracks dependencies without invoking another language model at every decision.

\begin{figure}[tb]
\centering
\safeincludegraphics[width=\figonewidth]{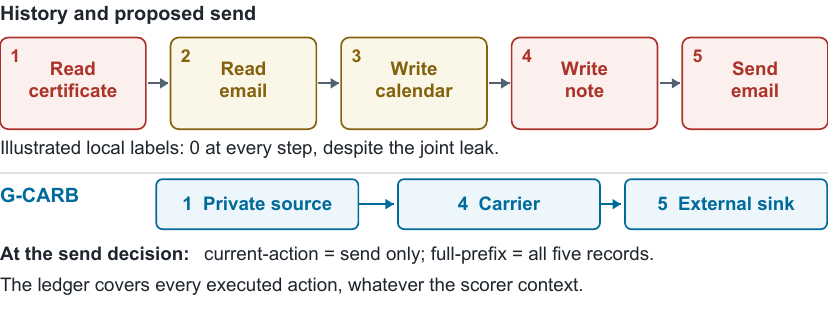}
\caption{\textbf{A leak across tool calls.} The certificate, note, and external send form a private source--carrier--sink chain. \gcarb{} retains this chain; Full-prefix includes unrelated work. The illustrated local labels miss the leak. The executed-step baseline instead labels the first harmful event and divides by the executed action count.}
\label{fig:running}
\end{figure}

Conformal risk control (CRC) bounds the expected loss it is given, so the choice of loss matters. Figure~\ref{fig:running} illustrates how labels on individual actions can miss a leak. Even when the first harmful action is correctly labeled, dividing by many executed actions can make the harmful-step rate small. The probability of harm in an episode can remain high. We call this the \emph{control-unit mismatch} and examine it with an executed-step-rate baseline. CORA represents the broader line of pre-action conformal control \citep{feng2026cora}.

\carb{} (Conformal Agent Risk Budget) evaluates each stopping policy with a global \emph{harm ledger}: a record of declared harmful events across all actions executed before stopping. We replay each policy on logged episodes to obtain its losses, then apply standard CRC to calibrate the stopping threshold. Harm incurred before stopping remains charged.

\gcarb{} uses observable dependencies to select the records supplied to the scorer. This focuses scoring on relevant history while the ledger continues to cover every executed action. We evaluate the gate on two 14B agents in AgentDojo. Controlled examples examine when retaining the relevant dependency adds value beyond shortening context.

\section{Related Work}
Conformal prediction provides finite-sample guarantees under exchangeability; CRC extends this framework to bounded monotone losses \citep{vovk2005algorithmic,angelopoulos2022conformal}. Agent applications differ in what they control. CORA targets harmful executed actions \citep{feng2026cora}, and CROP calibrates how much of a reasoning trace to retain \citep{cheung2026crop}. Other methods attribute failures within completed trajectories \citep{feng2026caea}, score or gate completed runs \citep{tayebati2026tracer,opoku2026toolchain}, or regulate deployment relative to a safe policy \citep{prinster2026conformalpolicy}. Related work calibrates generated sets, uncertain plans, and reasoning graphs \citep{quach2023conformal,han2024uala,ren2023robots,fayyazi2026proof,hittesdorf2026dcf,wang2026itcr}. \carb{} applies CRC to the harm accumulated along the execution prefix a stopping policy permits.

Dependency-based defenses use information flow, graphs, or formal rules to enforce security. Examples include CaMeL, RTBAS, AgentArmor, FORGE, and AgentFlow \citep{debenedetti2025camel,zhong2025rtbas,wang2025agentarmor,palumbo2026forge,wang2026agentflow}. IPIGuard constrains execution through a planned tool-dependency graph \citep{an-etal-2025-ipiguard}; Task Shield checks whether instructions and tool calls contribute to the user's task \citep{jia-etal-2025-task}. In \gcarb{}, the graph supplies evidence to a separate scorer, and calibration uses a ledger defined independently of that graph. AgentDojo provides the evaluation traces \citep{debenedetti2024agentdojo}, within a broader ecosystem of agent-safety benchmarks \citep{ruan2023toolemu,andriushchenko2024agentharm,zhan2024injecagent,yuan2024rjudge,zhang2024asb}. Our experiments compare calibration targets and context selectors within this pipeline.

\section{Problem Formulation}

An ungated episode records an agent's run without a stopping gate. Write it as $Z=(R_1,\ldots,R_T)$, with one record $R_t=(A_t,O_t)$ per executed action. Here $A_t$ is the action at step $t$, $O_t$ is its returned observation, and $T$ is the episode length. A later action may use any earlier observation, so records within an episode may be arbitrarily dependent.

Before $A_t$ executes, a fixed selector $C$ chooses evidence $C_t=C(Z_{<t},A_t)$ from the history and the proposal. A fixed scorer assigns $q_t=s(C_t)\in[0,1]$. The policy $\pi_\lambda$ executes $A_t$ when $q_t\leq\lambda$ and stops before the first proposal with $q_t>\lambda$, handing control to a human or a sandbox. The resulting executed prefix is $P_\lambda(Z)$. Larger thresholds permit longer prefixes; a stop-all threshold $\lstop$ permits no actions. At decision time, the gate has access only to the history and proposal, since $O_t$ becomes available after execution.

We evaluate harm across the full executed prefix. For a data leak, this means checking whether an outgoing action carries data from an earlier private read. The ledger defines these events in advance and records those that occur within $P_\lambda(Z)$.

\begin{definition}[Stopped-prefix ledger loss]
\label{def:risk}
Fix a finite set $\mathcal K$ of harm categories, each with an indicator $E_k(P)\in\{0,1\}$ that is determined by the executed records of the prefix $P$ alone and is non-decreasing as $P$ grows. A \emph{harm ledger} $\ell(P)\in[0,1]$ is any function of $(E_k(P))_{k\in\mathcal K}$ satisfying $\ell(\varnothing)=0$ and prefix monotonicity, $\ell(P)\leq\ell(P')$ whenever $P$ is a prefix of $P'$. The primary ledger is the union indicator $\Lunion(P)=\max_{k\in\mathcal K}E_k(P)\in\{0,1\}$, and the loss at threshold $\lambda$ on episode $Z$ is $L_\lambda(Z)=\ell(P_\lambda(Z))$.
\end{definition}

Raising $\lambda$ lengthens the executed prefix, so $L_\lambda(Z)$ is nondecreasing on every episode. Once a declared event occurs, it stays in the ledger through stopping or later recovery. This target measures whether harm occurred; applications that credit remediation would need a different loss.

A gate depends on both its calibration target and its scorer evidence. To examine the target, we compare episode loss with an executed-step rate $r_\lambda(Z)$: an indicator that harm occurred, divided by the number of permitted actions, with zero assigned to an empty prefix. This baseline calibrates the running maximum over a finite threshold grid $\Lambda$, $\widetilde r_\lambda(Z)=\max_{\lambda'\in\Lambda:\lambda'\leq\lambda}r_{\lambda'}(Z)$, which makes the target monotone. Appendix~\ref{app:protocol} gives the full specification. The next proposition shows why both target and evidence matter.

\begin{proposition}[Two obstructions to episode-level control]
\label{prop:necessity}
Fix $0<\alpha<p<1$.
\begin{enumerate}[label=(\roman*),leftmargin=1.8em,itemsep=1pt,topsep=2pt]
\item \textbf{Unit.} There exists a distribution of length-$K$ episodes, with $K\geq\lceil1/\alpha\rceil$ and no-guard harm prevalence $p$, such that every permitted prefix has $r_\lambda\leq\alpha$ and $\widetilde r_\lambda\leq\alpha$, but the all-execute policy has expected episode loss $p>\alpha$.
\item \textbf{Context.} Consider a harmful/benign pair drawn with probabilities $p$ and $1-p$, with a common candidate sink as the only possible ledger event, harmful only in the first member. Suppose the gate-visible score histories match through that sink under common policy randomness, including a common calibrated threshold whose randomness is independent of which member is drawn. Let $q$ be the overall probability that the sink executes, including survival through earlier decisions. Then $\E[L]=pq$, and $\E[L]\leq\alpha$ forces the matched benign sink not to execute with probability at least $1-\alpha/p$.
\end{enumerate}
\end{proposition}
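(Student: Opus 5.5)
Both parts are proved by explicit constructions and elementary expectation computations; the conformal machinery is not needed.

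\textbf{Part (i).} Set $K=\lceil 1/\alpha\rceil$ and use a single harm category. Draw $H\sim\mathrm{Bernoulli}(p)$. If $H=1$, the first action $A_1$ triggers the event $E_1$ and the remaining $K-1$ actions are harmless. If $H=0$, all $K$ actions are harmless. Because $E_1$ depends only on the executed records and is monotone, $\Lunion(P)=H\cdot\mathbf 1\{|P|\geq1\}$. Consider any permitted prefix $P_\lambda(Z)$ of length $m$.
\begin{itemize}[leftmargin=1.5em,itemsep=1pt,topsep=2pt]
\item If $m=0$, then $r_\lambda=0$ by convention.
\item If $m\geq1$, then $r_\lambda=H/m$.
\end{itemize}
The obstruction is that a short permitted prefix with $H=1$ (for example $m=1$) gives $r_\lambda=1>\alpha$. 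So the claim ``every permitted prefix has $r_\lambda\leq\alpha$'' must mean the prefixes the construction's policy actually permits. I would therefore build the scorer so that every threshold either stops all or executes all: set $q_t\equiv c$ for a constant $c$. Then for $\lambda<c$ the prefix is empty and $r_\lambda=0$. For $\lambda\geq c$ the prefix is the full episode and $r_\lambda=H/K\leq 1/K\leq\alpha$.

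Since $r_{\lambda'}\leq\alpha$ for every grid point, the running maximum also satisfies $\widetilde r_\lambda\leq\alpha$. The all-execute policy has $\E[L]=\E[H]=p>\alpha$. Choosing the event at $A_1$ also ensures the no-guard prevalence is exactly $p$. This is the main (modest) subtlety: I must make sure ``every permitted prefix'' ranges only over prefixes realizable under the constructed scorer. If instead it is read as all prefixes, I would place the harmful event at the last action $A_K$, so that any prefix containing it has length $K$. Then $r=H/K\leq\alpha$ for every prefix and still $\E[L]=p$.

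\textbf{Part (ii).} The only ledger event is the sink in the harmful member, so $L=\mathbf 1\{H=1,\ \text{sink executes}\}$. The gate-visible score histories coincide through the sink, and the policy randomness, including the calibrated threshold, is common and independent of which member $H$ is drawn. Hence the event ``sink executes'' is the same measurable function of that randomness in both members. It is therefore independent of $H$ and has probability $q$. This gives $\E[L]=P(H=1)\,q=pq$. Imposing $pq\leq\alpha$ gives $q\leq\alpha/p$. Applying the same identity to the benign member, its sink fails to execute with probability $1-q\geq1-\alpha/p$. The only care needed is to state this independence explicitly; the rest is one line.
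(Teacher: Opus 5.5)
Your proof is correct, and it contains the paper's proof, but for part (i) you should promote your fallback to the main argument.

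\textbf{Part (ii).} This is the paper's argument. A common threshold and common policy randomness, together with score histories matched through the sink, make every accept/reject decision up to and including the sink coincide. The paper spells this out as an induction over decisions, which is what your ``same measurable function'' step rests on. So the sink-execution indicator is identical for both members and independent of which member is drawn, and $\E[L]=pq$ and $1-q\geq1-\alpha/p$ follow.

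\textbf{Part (i).} The construction you list as a fallback, with the only union-ledger increment at action $K$, is exactly the paper's. With harm placed last, every prefix shorter than $K$ has rate $0$, and the full prefix has rate $H/K\leq1/K\leq\alpha$. The bound therefore holds for every threshold under \emph{any} scorer and selector. The proposition asserts only the existence of an episode distribution, so this gate-free version is what it claims, and it places the failure entirely in the step-rate unit.

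Your primary construction (harm at $A_1$ plus a constant scorer) proves a weaker, gate-dependent statement. As you observe, any gate that permits exactly one action on a harmful episode gets $r_\lambda=1>\alpha$. The conclusion therefore depends on choosing a degenerate scorer that permits only empty or full prefixes. It gains nothing over the last-action placement, which is both simpler and stronger.
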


Part (i) shows how many harmless steps can dilute the rate assigned to one harmful action. Part (ii) shows the cost of missing context: a gate that receives matching scores on harmful and benign histories must allow or stop them at the same rate. Together, these results motivate calibrating episode loss and retaining evidence that distinguishes the two histories. Proofs are in Appendix~\ref{app:proofs}.

\section{Methodology}

\paragraph{CARB: calibrate the permitted prefix.}
Before calibration, fix the ledger, selector, scorer, input-sanitization rules, and a finite threshold grid $\Lambda\subset[0,1]\cup\{\lstop\}$ with $\lstop<0$. For each threshold, replay $n$ logged ungated episodes $\mathcal D_{\rm cal}=(Z_1,\ldots,Z_n)$: follow each episode until the gate would reject a proposal, then evaluate $L_\lambda(Z_i)=\ell(P_\lambda(Z_i))$ on the permitted prefix. For a chosen risk budget $\alpha$, CARB selects the largest feasible threshold:
\begin{equation}
\widehat R^+(\lambda)=\frac{1+\sum_{i=1}^{n}L_\lambda(Z_i)}{n+1},
\qquad
\widehat\lambda=\max\{\lambda\in\Lambda:\widehat R^+(\lambda)\leq\alpha\}.
\label{eq:calibration}
\end{equation}
The correction adds one worst-case loss to the calibration sum. Choosing the largest feasible threshold allows as much execution as this criterion permits. Each proposal's score stays fixed across thresholds, so a higher threshold can only extend the permitted prefix. Algorithm~\ref{alg:carb} gives the procedure.

\paragraph{G-CARB: select the dependency evidence.}

\begin{definition}[Graph-localized context]
\label{def:graphselector}
At decision $t$, let $G_t=(V_t,E_t)$ be a directed acyclic graph on the executed records $Z_{<t}$ and proposal $A_t$, built from prefix-visible fields. Frozen tool metadata identifies private sources $S_t$ and the designated sink types. When $A_t$ is a sink reachable from a source, \gcarb{} supplies
\[
C_t^{\mathrm G}=\{v\in V_t:\ u\leadsto v\leadsto A_t\ \text{for some}\ u\in S_t\},
\]
the nodes on source-to-proposal paths, including their endpoints. Otherwise it supplies $A_t$ and its ancestors. Full-prefix uses $C_t^{\mathrm F}=V_t$.
\end{definition}

Graph edges link earlier records to later actions through shared object identifiers, declared input handles, or exact content reuse. The experimental sink types include external and financial actions; other proposals use the ancestor rule in Definition~\ref{def:graphselector} (Appendix~\ref{app:selector}). \gcarb{} and Full-prefix share the graph builder, node types, scorer, grid, and ledger. Only the selected records differ. They are supplied in chronological order, with the proposal's result left empty at scoring time.

In the certificate--note--email example, the reused private field or its handle links the read to the note and proposed send. \gcarb{} retains this chain and omits a disconnected public task. An email carrying only the public document instead follows the public branch. In both cases, the ledger evaluates every executed record.

The scorer adds fixed weights for binary features and clips the sum to $[0,1]$: $q_t=\operatorname{clip}_{[0,1]}(w^\top x(C_t))$. Features describe the proposal, selected private or injected content, and the presence of a source-to-sink path (Appendix~\ref{app:scorer}). Private records can raise this score even when disconnected from the proposal, which helps explain the benefit of pruning context. All selectors use the same weights, and computing the score requires no language-model inference.

\paragraph{Guarantee and replay.}
G-CARB selects evidence using only the history and current proposal. Once this rule is fixed, its losses form the same ordered family used by CARB. The following guarantee therefore covers both methods.

\begin{theorem}[Stopped-prefix conformal risk control]
\label{thm:crc}
Let $\mathcal F$ contain the ledger satisfying Definition~\ref{def:risk}, selector, scorer, finite grid containing $\lstop$, and all sanitization and replayable scoring rules, fixed before calibration. Conditional on $\mathcal F$, suppose $Z_1,\ldots,Z_n,Z_{\rm new}$ are exchangeable and the loss vector is the same deterministic function of each episode. For $\alpha\in[1/(n+1),1]$, Equation~\ref{eq:calibration} returns a threshold satisfying
\[
\E_{\mathcal D_{\rm cal},Z_{\rm new}}\!\left[
L_{\widehat\lambda(\mathcal D_{\rm cal})}(Z_{\rm new})
\mid\mathcal F\right]\leq\alpha.
\]
Averaging over $\mathcal F$ gives the unconditional bound. For $\alpha<1/(n+1)$, no threshold satisfies the corrected criterion.
\end{theorem}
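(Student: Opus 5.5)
The plan is to reduce the statement to the standard conformal risk control argument of \citet{angelopoulos2022conformal}, applied to the family $\lambda\mapsto L_\lambda(Z)$ on the finite grid $\Lambda$. I would proceed in three steps.

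\textbf{Step 1: verify the CRC hypotheses, conditional on $\mathcal F$.} For each episode, the scores $q_t=s(C(Z_{<t},A_t))$ are a deterministic function of $Z$. Because the selector uses only the history and the proposal, and the scorer and sanitization rules are fixed, each score is the same value at every threshold. Hence $P_\lambda(Z)$ is the longest prefix whose scores are all $\leq\lambda$, and it is nested and nondecreasing in $\lambda$. Prefix monotonicity of $\ell$ in Definition~\ref{def:risk} then makes $L_\lambda(Z)$ nondecreasing in $\lambda$. Each loss also lies in $[0,1]$. Since $\lstop<0\leq q_t$, the stop-all threshold gives $P_{\lstop}=\varnothing$, so $L_{\lstop}\equiv0$. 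Finally, the loss vector $(L_\lambda(Z_i))_{\lambda\in\Lambda}$ is the same function applied to each exchangeable episode, so these vectors are themselves exchangeable.

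\textbf{Step 2: the oracle-threshold argument.} Define $\lambda'$ as the largest $\lambda\in\Lambda$ with
\[
\frac{1}{n+1}\sum_{i=1}^{n+1}L_\lambda(Z_i)\leq\alpha,
\]
where $Z_{n+1}=Z_{\rm new}$. This set is nonempty because $\lstop$ gives zero, and $\lambda'$ is a symmetric function of the $n+1$ episodes.

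Because $L_\lambda(Z_{n+1})\leq1$, the full average is at most $\widehat R^+(\lambda)$. So every $\lambda$ feasible for Equation~\ref{eq:calibration} is feasible for the full average, which gives $\widehat\lambda\leq\lambda'$. Monotonicity then gives $L_{\widehat\lambda}(Z_{n+1})\leq L_{\lambda'}(Z_{n+1})$.

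By exchangeability and the symmetry of $\lambda'$,
\[
\E[L_{\lambda'}(Z_{n+1})\mid\mathcal F]=\E\Big[\tfrac{1}{n+1}\textstyle\sum_i L_{\lambda'}(Z_i)\,\Big|\,\mathcal F\Big]\leq\alpha .
\]
This is the conditional bound. Taking the expectation over $\mathcal F$ by the tower property gives the unconditional bound.

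\textbf{Step 3: the edge case $\alpha<1/(n+1)$.} Every $\lambda$, including $\lstop$, has $\widehat R^+(\lambda)\geq1/(n+1)>\alpha$, so no threshold satisfies the corrected criterion. For $\alpha\geq1/(n+1)$, $\lstop$ has $\widehat R^+(\lstop)=1/(n+1)\leq\alpha$, so the maximum in Equation~\ref{eq:calibration} is well defined.

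\textbf{Main obstacle.} The delicate point is Step 1: showing that the gated, replay-defined losses really form a monotone family that is a fixed deterministic function of each episode. The concern is that stopping could change which proposals exist, so later proposals or scores might depend on $\lambda$. I would handle this by noting that replay is on the ungated logged episode $Z$, so proposals never depend on $\lambda$. The stopping rule only truncates, and scores depend only on $(Z_{<t},A_t)$. Ties in the maximum over the grid are harmless because $\Lambda$ is finite.
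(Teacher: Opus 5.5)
Your proposal is correct and follows the paper's route. You reduce to finite-grid CRC by checking that the replayed losses are nested, nondecreasing in $\lambda$, bounded by one, zero at $\lambda_{\bot}$, and exchangeable; you then compare $\widehat\lambda$ with the full-sample oracle threshold, and you use the same $1/(n+1)$ infeasibility argument. The only difference is the order of steps. Lemma~\ref{lem:grid} first symmetrizes through the leave-one-out thresholds $\lambda_{-i}$ (with $\widehat\lambda=\lambda_{-(n+1)}$) and then dominates each of them by the oracle. You instead dominate first, $L_{\widehat\lambda}(Z_{n+1})\le L_{\lambda'}(Z_{n+1})$, and then symmetrize the oracle, which is the original CRC ordering and needs the domination only once.
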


This applies standard CRC \citep{angelopoulos2022conformal} to the stopped-prefix losses. The bound averages over both the calibration sample and a future episode; individual deployment batches can exceed the budget. The correction requires $\alpha\geq1/(n+1)$. Below this floor, Equation~\ref{eq:calibration} is infeasible, and stop-all remains available with known zero ledger loss.

\begin{corollary}[Fixed-corpus random assignment]
\label{cor:fixed-corpus}
Fix a finite corpus and every episode's loss vector before assignment. Under a uniform permutation into $n$ calibration and $m\geq1$ held-out positions, for $\alpha\geq1/(n+1)$ the expected held-out mean of $L_{\widehat\lambda}$ is at most $\alpha$, conditional on the unordered corpus and frozen pipeline. The expectation is over the assignment.
\end{corollary}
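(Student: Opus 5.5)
We plan to reduce Corollary~\ref{cor:fixed-corpus} to Theorem~\ref{thm:crc}, using the fact that a uniform random assignment makes the positions exchangeable. Let the unordered corpus have $N=n+m$ episodes. Because the pipeline is frozen, each episode carries a fixed loss vector $(L_\lambda(Z))_{\lambda\in\Lambda}$. A uniform permutation $\sigma$ places episodes $Z_{\sigma(1)},\ldots,Z_{\sigma(n)}$ in calibration and $Z_{\sigma(n+1)},\ldots,Z_{\sigma(N)}$ in the held-out positions.

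\textbf{Step 1: exchangeability of each held-out position.} Conditional on the unordered corpus and $\mathcal F$, the whole sequence $(Z_{\sigma(1)},\ldots,Z_{\sigma(N)})$ is exchangeable, since composing a uniform permutation with a fixed permutation is again uniform. Fix one held-out index $j\in\{n+1,\ldots,N\}$. Any subvector of an exchangeable vector is exchangeable, so $(Z_{\sigma(1)},\ldots,Z_{\sigma(n)},Z_{\sigma(j)})$ is exchangeable.

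\textbf{Step 2: apply the theorem.} The loss vector of each episode is a deterministic function of that episode, and $\widehat\lambda$ depends only on the calibration losses through Equation~\ref{eq:calibration}. Theorem~\ref{thm:crc} therefore applies with $Z_{\rm new}=Z_{\sigma(j)}$, conditional on the corpus and frozen pipeline, and gives $\E[L_{\widehat\lambda}(Z_{\sigma(j)})]\le\alpha$ for $\alpha\ge 1/(n+1)$. The only subtlety is that the theorem's conditioning on $\mathcal F$ must be strengthened to conditioning on the unordered corpus. This is legitimate because the theorem's proof uses only exchangeability of the $n+1$ loss vectors given whatever is being conditioned on.

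To make this step self-contained, we can instead re-derive the standard CRC argument directly. Define the oracle threshold $\lambda^*=\max\{\lambda\in\Lambda:\frac1{n+1}\sum_{i\in\{1..n,j\}}L_\lambda\le\alpha\}$. This set is nonempty because $\lstop$ gives zero loss. Three facts then finish the argument:
\begin{enumerate}[label=(\alph*),leftmargin=1.8em,itemsep=1pt,topsep=2pt]
\item Since $L_\lambda\le 1$ and $L_\lambda$ is monotone in $\lambda$, we have $\widehat\lambda\le\lambda^*$.
\item By monotonicity again, $L_{\widehat\lambda}(Z_{\sigma(j)})\le L_{\lambda^*}(Z_{\sigma(j)})$.
\item $\lambda^*$ is a symmetric function of the $n+1$ episodes, so exchangeability gives $\E[L_{\lambda^*}(Z_{\sigma(j)})]=\E[\frac1{n+1}\sum L_{\lambda^*}]\le\alpha$.
\end{enumerate}

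\textbf{Step 3: average over held-out positions.} The held-out mean is $\frac1m\sum_{j=n+1}^{N}L_{\widehat\lambda}(Z_{\sigma(j)})$. Linearity of expectation and Step~2 bound it by $\alpha$. Dependence among the held-out terms is irrelevant, because only expectations are summed.

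We expect the main care point to be Step~2: justifying conditioning on the finite corpus, where the randomness comes only from the assignment and not from sampling. Re-deriving the oracle-threshold argument handles this cleanly.
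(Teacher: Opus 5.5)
Your proposal is correct and follows the paper's proof: you put the unordered corpus into the conditioning, note that the $n$ calibration positions together with any single held-out position are exchangeable under the uniform permutation, apply Theorem~\ref{thm:crc} to each held-out position, and average by linearity. Your optional self-contained argument is also valid. It shows $\widehat\lambda\le\lambda^*$ directly and uses the symmetry of $\lambda^*$, whereas Lemma~\ref{lem:grid} goes through the leave-one-out thresholds $\lambda_{-i}$, but both are the same standard CRC argument.
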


This corollary applies to the AgentDojo experiment, where each split reassigns the same corpus to calibration and test sets. Repeated splits measure sensitivity to that assignment. The next proposition states when replay reproduces online stopping.

\begin{proposition}[Replay equivalence]
\label{prop:replay}
Fix a threshold $\lambda$ and a common initial state. Couple offline replay and online execution through the same agent and environment randomness. Use a deterministic frozen wrapper or couple its randomness as well. Suppose the wrapper is silent and free of side effects before
rejection and observes the same prefix-visible fields in both runs. Then the proposals,
contexts, and scores agree up to the first rejection, the executed records and ledger state
agree before it, and the rejected proposal executes in neither run. Both runs therefore
produce the same stopped prefix and the same $L_\lambda(Z)$.
\end{proposition}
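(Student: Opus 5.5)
The plan is an induction on the decision index $t$, under the coupling. Fix $\lambda$ and the common initial state. Couple the offline replay and the online run so that they share the agent's sampling randomness, the environment's randomness, and, if the wrapper is not deterministic, the wrapper's randomness. The inductive hypothesis $H_t$ is: for every $s<t$, neither run has rejected before decision $s$, and the two runs have produced identical executed records $R_s=(A_s,O_s)$. The base case $H_1$ holds trivially because both runs start from the same state with an empty history.

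For the inductive step, assume $H_t$ and that no rejection has occurred before $t$.
\begin{enumerate}
\item \textbf{Proposals agree.} By hypothesis the wrapper is silent and has no side effects before rejection. So in the online run it injects nothing into the agent's input and does not change the environment. The agent therefore faces the same history $Z_{<t}$ in both runs, and with shared randomness it emits the same proposal $A_t$.
\item \textbf{Contexts agree.} The selector $C$ is fixed in $\mathcal F$ and reads only prefix-visible fields, which are the same in both runs by hypothesis. Since the graph $G_t$ is built from those fields, $C_t=C(Z_{<t},A_t)$ agrees, and this holds for G-CARB's graph-localized selector as well.
\item \textbf{Scores agree.} Because the scorer is fixed, $q_t=s(C_t)$ is identical in both runs.
\item \textbf{Same decision.} Both runs compare the same $q_t$ to the same $\lambda$, so they either both execute or both reject.
\item \textbf{If both execute.} The environment receives the same action, state, and coupled randomness, so it returns the same $O_t$. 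This gives $H_{t+1}$.
\item \textbf{If both reject.} $A_t$ is never sent to the environment in either run, so it executes in neither.
\end{enumerate}

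The ungated episode is finite, so the induction ends at either the first rejection or the end of the episode. In both cases the runs share the same executed records, which means $P_\lambda$ is the same in each. By Definition~\ref{def:risk}, each $E_k$ and hence $\ell$ depends only on the executed records of the prefix. It follows that the ledger states agree at every step before stopping and that $L_\lambda(Z)=\ell(P_\lambda(Z))$ is the same in both runs.

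The main obstacle is step 1, showing that online gating does not change what the agent sees or what the environment does before the first rejection. This is exactly where the silence and side-effect-freeness assumptions are needed. For example, a wrapper that logged scores into the agent's context, or that consumed environment randomness, would break the coupling. I would therefore state carefully that the agent's input and the environment's transition depend only on the shared history and the shared randomness streams. I would also note that offline replay reads the records of a logged ungated episode. Identifying that logged episode with the coupled online run is legitimate because, before rejection, the gated online run is indistinguishable from an ungated run driven by the same randomness.
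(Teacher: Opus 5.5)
Your proposal is correct and takes essentially the same route as the paper's proof. Both argue by induction over decisions: silence and side-effect-freeness keep the agent state, and hence the coupled proposal, identical; prefix measurability of the frozen selector and scorer then fixes the context, score, and accept/reject decision, coupled environment randomness gives the same observation on acceptance, and the ledger, being deterministic on the common executed prefix, yields the same $L_\lambda(Z)$. Your closing remark, which identifies the logged ungated episode with the coupled online run before rejection, makes explicit a step the paper leaves implicit.
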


Proofs are in Appendix~\ref{app:proofs}. Warnings, prompt edits, or retries before stopping can change the agent's later actions. Evaluating such a wrapper requires trajectories collected with those interactions.

\section{Experiments}

We examine three questions: can a small harmful-step rate hide substantial episode harm, does ledger calibration keep mean episode loss within budget, and when does dependency-based context selection help?

\paragraph{AgentDojo and the ledger.}
In \textbf{AgentDojo}, an agent performs a benign user task while tool responses may contain injected instructions that redirect it toward an adversarial objective \citep{debenedetti2024agentdojo}. We use $984$ ungated episodes per backbone from Qwen2.5-14B-Instruct and Qwen3-14B. Each logged trajectory supplies the records for replaying candidate stopping policies.

The ledger records five harm categories: privacy flow, financial action, irreversible action, social action, and security event. An injection-directed event is registered when an observation contains an adversarial objective and a matching tool call later succeeds. Privacy flow also records whether a private canary---a synthetic marker---reaches an executed external or financial sink. Rejected proposals and private reads alone create no event. This ledger evaluates executed prefixes separately from AgentDojo's terminal evaluator. It lacks an independent error audit, and the effect of rewriting canaries in $27$ Qwen2.5 traces remains unmeasured (Appendix~\ref{app:ledger}).

\paragraph{Controlled suites.}
The Compositional Exfiltration Suite (CES) contains $80$ constructed leaks and $240$ matched non-leaks: public sends, private reads without sends, and disconnected private/public flows. Its clean subset pairs the leaks with $80$ public sends. Graph-braid interleaves harmful chains with unrelated benign branches in $2{,}000$ constructed trajectories. Labels come from the trace design and remain hidden from the graph builder and scorer. CES uses $100$ assignments with related trajectories kept together; graph-braid uses $200$ trajectory assignments. These suites test specific failure mechanisms at harm frequencies chosen by construction. The grouped CES splits fall outside Corollary~\ref{cor:fixed-corpus}; their results are descriptive. Appendix~\ref{app:protocol} gives fold sizes and grouping details.

\paragraph{Baselines.}
\emph{No guard} executes the complete trajectory. \emph{Executed-step CRC} sees only the current proposal and calibrates the harmful-step rate envelope. \emph{Ledger-current} calibrates episode loss using only the current proposal; it tests whether history is needed in the constructed suites. The remaining controls share \gcarb{}'s ledger, scorer, splits, and grid. \emph{Full-prefix} supplies all earlier records and the proposal. \emph{Fixed-window-4} selects recent records. \emph{Equal-size random context} retains the proposal and a fixed pseudorandom subset of earlier records, matching \gcarb{}'s size at each decision. This tests whether the identity of retained records matters beyond their count.

\paragraph{Metrics.}
The primary safety metric is mean episode loss $\bar L$: the fraction of test episodes with a ledger event before stopping. \emph{Intervention} $I$ is the fraction in which the gate rejects a proposal. \emph{Autonomous success} $U$ is the fraction whose logged task completes before handoff; completion can coexist with a ledger event. The controlled suites also report the fraction of harmful sinks prevented from executing. Their benign-stop metrics have different denominators: non-leak trajectories in CES, and all trajectories in graph-braid. The graph-braid metric counts first stops on disconnected benign branches.

Scorer-input reduction, CtxRed, measures the fraction of records omitted from the scorer's input. We sum selected and full-prefix record counts over the same proposals, including the first rejected proposal, and report one minus their ratio (Appendix~\ref{app:protocol}). Graph construction and total runtime are outside this measurement.

\paragraph{Calibration protocol.}
Before splitting, we freeze the scorer, ledger, graph builder, sanitizer, $102$-element threshold grid, and pipeline hash. Each backbone uses $200$ seeded permutations into $492$ calibration and $492$ test episodes. Methods share these assignments, the ledger, and the grid. Each selector receives its own threshold from Equation~\ref{eq:calibration} at the same budget. We interpret these results conditional on the fixed corpus and scorer, as in Corollary~\ref{cor:fixed-corpus}. Means and percentiles summarize variation across assignments.

\paragraph{A small step rate can coexist with episode harm.}
Executed-step CRC keeps its reported step rate below budget while its mean episode loss exceeds it (Appendix Table~\ref{tab:agentdojo}). At three of the four displayed points, its mean episode loss matches no guard to reported precision ($.133$, $.133$, and $.225$). Dividing a harmful event by trajectory length makes the rate small even when the event occurs, as Proposition~\ref{prop:necessity}(i) predicts. The methods calibrated on the global ledger keep their mean held-out episode losses below the displayed budgets (\frontierref{}). This AgentDojo comparison changes both the target and context. The constructed suites compare Ledger-current with other ledger-calibrated selectors; they do not isolate the effect of changing the calibration target.

\begin{figure}[tb]
\centering
\safeincludegraphics[width=\linewidth]{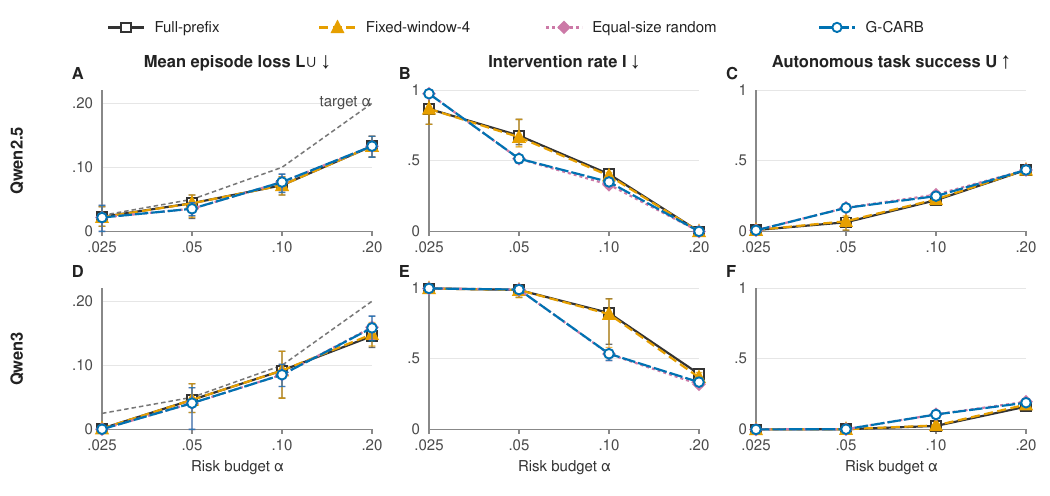}
\caption{\textbf{AgentDojo risk--utility frontiers.} Shorter context improves autonomous success at intermediate budgets; equal-size random context closely tracks \gcarb{}. Every plotted method's mean ledger loss stays below the budget (dashed line). Points average $200$ shared calibration/test assignments of the fixed corpus. Whiskers show their empirical 5th--95th percentiles. Appendix Tables~\ref{tab:agentdojo} and~\ref{tab:context-controls} give the values.}
\label{fig:frontiers}
\end{figure}

\paragraph{Context restriction improves autonomous success.}
Context restriction has its greatest effect at intermediate budgets (Figure~\ref{fig:frontiers}). At strict budgets, especially for Qwen3, the gates stop almost every episode; at the most permissive Qwen2.5 point, all methods execute the full trajectories.

Equal-size random context closely follows \gcarb{}, while the fixed window stays nearer Full-prefix. Removing private or injected records can lower this scorer's output and reduce unnecessary stops. At budgets $.05$ and $.10$, G-CARB supplies roughly half the full-prefix record count. The constructed suites examine when record identity also matters.

\paragraph{Dependency evidence helps avoid benign stops.}
CES near-misses vary whether private data reaches the same sink type. Current-only scoring lacks this history; random selection can discard it. In graph-braid, an unrelated branch can trigger an early full-prefix stop. Every gated method stops the harmful sinks, but G-CARB avoids benign stops in both constructions (Figure~\ref{fig:mechanism}).

\begin{figure}[tb]
\centering
\safeincludegraphics[width=\linewidth]{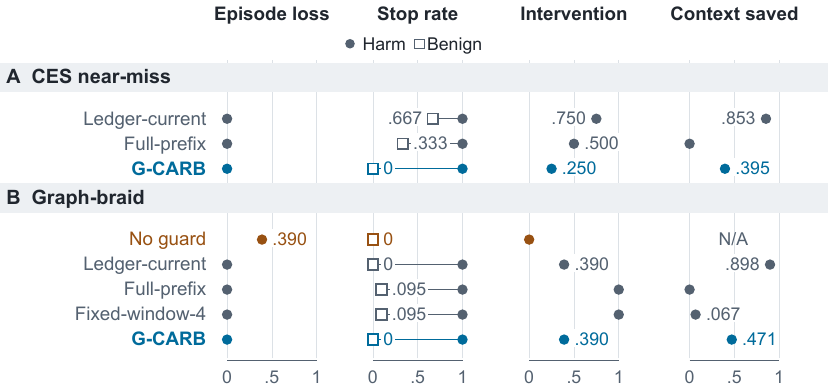}
\caption{\textbf{Stopping harmful flows while allowing benign work, at $\alpha=.10$.} All gated methods shown stop every harmful sink. CES benign stop counts stopped non-leaks; graph-braid counts only first stops on disconnected benign branches, divided by all trajectories. Thus Full-prefix intervenes in every graph-braid episode but has benign-stop rate $.095$. Context saved is CtxRed. Means use $100$ CES and $200$ graph-braid assignments; Appendix Table~\ref{tab:constructed} includes equal-size random context.}
\label{fig:mechanism}
\end{figure}

Ledger-current matches \gcarb{} on graph-braid, and Full-prefix matches it on CES clean. Each simpler context suffices in one construction; G-CARB handles both by retaining the relevant history and excluding unrelated records.

\paragraph{Observability and scope.}
The CES audit holds calibrated thresholds fixed while removing structured provenance, exact sink text, or both. Either remaining channel identifies the harmful paths; removing both returns loss to the no-guard level (Appendix~\ref{app:observability}). Transfer requires validation and calibration under the theorem's assumptions.

\section{Conclusion}
\carb{} applies standard CRC to accumulated harm before stopping; \gcarb{} selects dependency evidence while retaining that global loss. Context restriction improves AgentDojo task completion and roughly halves scorer-input records at intermediate budgets. Random context performs similarly; constructed diagnostics show where relevant history also avoids benign stops. Smaller backbones, alternative scorers, and total gate overhead remain evaluation priorities.

\paragraph{Limitations.}
The guarantee bounds expected declared ledger loss under exchangeability, not harm in every episode or harm outside the ledger. The ledger lacks an independent error audit, and semantic or paraphrased flows may escape the observable matching rules. AgentDojo evidence comes from repeated assignments of a fixed corpus using one scorer and two same-family 14B backbones; broader selectivity remains untested. Replay covers stopping without prior feedback to the agent, leaving interactive interventions and outcomes after handoff unevaluated. Finally, fewer scorer-input records do not establish lower total runtime: graph construction, end-to-end latency, and intervention costs remain unmeasured.

%% The template supplies this unnumbered section in final/preprint mode.
%% Confirmed support; additional coauthor funding/disclosures remain to be collected.
\begin{ack}
This work benefited from the computational resources provided by GERAD. Zijun Yu acknowledges partial support from an ISM Graduate Scholarship. Masoud Asgharian acknowledges the support of the Natural Sciences and Engineering Research Council of Canada (NSERC) [RGPIN-2024-05640].
\end{ack}

\bibliographystyle{plainnat}
\bibliography{aaai2027,workshop_additions} %% original bibliography plus the two added related-work records

\clearpage
\appendix
%% Included by workshop_main.tex after \appendix; no standalone preamble.
%% Numerical recovery source: frozen Appendix.tex. Values are preserved,
%% not recomputed; unsupported robustness/official-outcome claims are omitted.

\section{Theory and proofs}
\label{app:proofs}

\subsection{Control unit and matched histories}

\begin{proof}[Proof of Proposition~\ref{prop:necessity}(i)]
Choose an integer $K\geq\lceil1/\alpha\rceil$. Every trajectory has $K$
actions. With probability $p$, its only union-ledger increment occurs at
action $K$; otherwise the ledger never increments. A policy that stops
before action $K$ has both zero episode loss and zero step rate. A policy
that executes action $K$ has rate $1/K$ on the harmful member and zero on
the benign member. Thus every threshold has
$r_\lambda(Z)\leq1/K\leq\alpha$, and taking the maximum over lower
thresholds preserves the bound for $\widetilde r_\lambda(Z)$. The
all-execute threshold $\lambda=1$ permits the complete trajectory and
has $\E[\Lunion(P_1(Z))]=p>\alpha$.
\end{proof}

The counterexample establishes compatibility of a small rate with large
episode risk. It does not assert that Equation~\ref{eq:calibration},
with its finite-sample correction and the envelope substituted for the
ledger, accepts the all-execute policy at every sample size. More
generally, for any possibly randomized stopping policy applied to a
binary union ledger,
\[
\E[\Lunion(P_{\widehat\lambda}(Z))]
=p\,\Pr\!\left(\text{first harmful action executes}
\mid \Lunion(Z)=1\right),
\]
where $p$ is the no-guard prevalence and the conditional probability
includes calibration and policy randomness. Gated risk equals $p$ only
when every harmful trajectory reaches that action.

\begin{proof}[Proof of Proposition~\ref{prop:necessity}(ii)]
Couple the two potential members using the same calibrated threshold
and policy randomness, independent of the member drawn. By assumption,
the score histories through the candidate sink agree. Inducting over
these histories, every preceding accept/reject decision agrees, as does
the decision at the sink if reached. Hence the indicator that the sink
executes is identical for both members under the coupling. Write $q$ for
its probability, averaging over calibration and policy randomness. Only
the harmful member can incur a ledger event, and it does so exactly when
the sink executes, giving $\E[L]=pq$. The inequality $pq\leq\alpha$
implies $q\leq\alpha/p$. The benign member therefore fails to execute
its sink with probability $1-q\geq1-\alpha/p$.
\end{proof}

The sink may be prevented by an earlier stop, so matching histories
through the sink is essential: equal final scores alone leave earlier
survival probabilities unconstrained. Observable provenance can
distinguish the constructed pair before its sink executes.

\subsection{Ordered-grid CRC and stopped-prefix calibration}

\begin{lemma}[Finite ordered-grid CRC]
\label{lem:grid}
Let $\mathcal F$ collect every component fixed before calibration and write
$L_i(\lambda)=L_\lambda(Z_i)$. Conditional on $\mathcal F$, let $Z_1,\ldots,Z_{n+1}$ be
exchangeable, let $\Lambda$ be a finite totally ordered grid, and let
$L_i(\lambda)\in[0,1]$ be the same deterministic loss map for every $i$, nondecreasing in
$\lambda$, with $\Lambda$ containing a stop-all value $\lstop$ satisfying
$L_i(\lstop)=0$. For $\alpha\geq1/(n+1)$, define
\[
\widehat\lambda=\max\Bigl\{\lambda\in\Lambda:\ \bigl(1+\textstyle\sum_{i=1}^{n}L_i(\lambda)\bigr)/(n+1)\leq\alpha\Bigr\}.
\]
Then the feasible set is nonempty and $\E[L_{n+1}(\widehat\lambda)\mid\mathcal F]\leq\alpha$.
\end{lemma}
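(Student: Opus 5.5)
The plan is the standard conformal risk control argument through an oracle threshold that sees all $n+1$ losses. I will work conditional on $\mathcal F$ throughout, so the grid, the loss map and the ordering are fixed.

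\emph{Nonemptiness.} Since $L_i(\lstop)=0$ for every $i$, the corrected criterion at $\lstop$ reads $1/(n+1)\leq\alpha$. This holds by assumption, so $\widehat\lambda$ is well defined. It is a deterministic function of $(Z_1,\ldots,Z_n)$ through the loss vectors.

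\emph{Oracle threshold.} Define
\[
\lambda^\ast=\max\Bigl\{\lambda\in\Lambda:\ \tfrac{1}{n+1}\textstyle\sum_{i=1}^{n+1}L_i(\lambda)\leq\alpha\Bigr\}.
\]
This set also contains $\lstop$, because the full average there is $0\leq\alpha$. Two facts about $\lambda^\ast$ drive the proof.

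\begin{itemize}
\item \textbf{Symmetry.} $\lambda^\ast$ is a symmetric function of the multiset $\{Z_1,\ldots,Z_{n+1}\}$, since it depends only on the sum of the losses.
\item \textbf{Domination.} $\widehat\lambda\leq\lambda^\ast$ holds pointwise. Because $L_{n+1}(\lambda)\leq1$, for every $\lambda$ we have $\sum_{i\le n+1}L_i(\lambda)\leq 1+\sum_{i\le n}L_i(\lambda)$. So any $\lambda$ feasible for the corrected criterion is feasible for the oracle criterion, and taking the maximum over a finite totally ordered grid gives the inequality.
\end{itemize}

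Monotonicity of $L_{n+1}$ in $\lambda$ then gives $L_{n+1}(\widehat\lambda)\leq L_{n+1}(\lambda^\ast)$.

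\emph{Exchangeability step.} Since $\lambda^\ast$ is invariant under permutations of the data and the loss map is the same for every index, exchangeability gives
\[
\E[L_{n+1}(\lambda^\ast)\mid\mathcal F]=\E[L_j(\lambda^\ast)\mid\mathcal F]\quad\text{for every } j.
\]
Averaging over $j$ yields
\[
\E\Bigl[\tfrac{1}{n+1}\textstyle\sum_{j}L_j(\lambda^\ast)\Bigm|\mathcal F\Bigr],
\]
and this is at most $\alpha$ by the definition of $\lambda^\ast$. Chaining the inequalities gives $\E[L_{n+1}(\widehat\lambda)\mid\mathcal F]\leq\alpha$.

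\emph{Main obstacle.} The delicate point is the equality of marginal expectations. It requires that $(Z_{\sigma(1)},\ldots,Z_{\sigma(n+1)})$ has the same conditional law as the original sequence. It also requires that $\lambda^\ast$, viewed as a function of the permuted sequence, is unchanged. I would make this explicit by writing $L_j(\lambda^\ast)=g(Z_j;\{Z_1,\ldots,Z_{n+1}\})$ for a fixed measurable $g$, and applying exchangeability to the transposition swapping $j$ and $n+1$.

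Finiteness of $\Lambda$ makes every maximum attained and removes any measurability or right-continuity issues, so no limiting argument is needed.
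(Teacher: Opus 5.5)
Your proof is correct and uses the same ingredients as the paper's: the augmented oracle threshold $\lambda^\ast$, the domination $\widehat\lambda\leq\lambda^\ast$ obtained from $L_{n+1}\leq1$, monotonicity, and exchangeability. The only difference is bookkeeping: the paper forms every leave-one-out threshold $\lambda_{-i}\leq\lambda^\ast$ and symmetrizes via $\E[L_{n+1}(\widehat\lambda)\mid\mathcal F]=(n+1)^{-1}\sum_i\E[L_i(\lambda_{-i})\mid\mathcal F]$, whereas you apply exchangeability directly to the permutation-invariant $\lambda^\ast$, which is the standard conformal risk control presentation and needs only the single domination $\widehat\lambda\leq\lambda^\ast$.
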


\begin{proof}
For each $i\in\{1,\ldots,n+1\}$ apply the rule with observation $i$ left out,
\[
\lambda_{-i}=\max\Bigl\{\lambda\in\Lambda:\ 1+\textstyle\sum_{j\neq i}L_j(\lambda)\leq(n+1)\alpha\Bigr\},
\]
which is well defined because the stop-all value is always feasible. Since
$L_i(\lambda_{-i})\leq1$, we get $\sum_{j=1}^{n+1}L_j(\lambda_{-i})\leq(n+1)\alpha$. Let
$\lambda^*$ be the largest grid point satisfying that inequality for the full augmented
sample. Each $\lambda_{-i}$ is feasible for the full-sample problem, so
$\lambda_{-i}\leq\lambda^*$, and monotonicity gives
\[
\textstyle\sum_{i=1}^{n+1}L_i(\lambda_{-i})\leq\sum_{i=1}^{n+1}L_i(\lambda^*)\leq(n+1)\alpha.
\]
The leave-one-out construction is permutation symmetric and
$\widehat\lambda=\lambda_{-(n+1)}$, so conditional exchangeability yields
$\E[L_{n+1}(\widehat\lambda)\mid\mathcal F]=(n+1)^{-1}\sum_{i=1}^{n+1}\E[L_i(\lambda_{-i})\mid\mathcal F]\leq\alpha$.
This is the finite ordered-grid specialization of conformal risk control for a
unit-bounded loss \citep{angelopoulos2022conformal}.
\end{proof}

\begin{proof}[Proof of Theorem~\ref{thm:crc}]
Conditional on $\mathcal F$, fixing the ledger, selector, scorer, grid, and sanitizer makes
$(L_\lambda(Z))_{\lambda\in\Lambda}$ a deterministic function of an episode, so
conditionally exchangeable episodes give conditionally exchangeable loss vectors. A larger threshold permits a weakly
longer prefix, and prefix monotonicity of the ledger makes the loss nondecreasing in
$\lambda$. The grid contains $\lstop$, which permits no records and gives
$L_{\lstop}(Z)=\ell(\varnothing)=0$. Lemma~\ref{lem:grid} then gives the bound conditional
on the frozen components, and averaging over them gives the unconditional statement.

At $\lstop$ every calibration loss is zero, so the corrected risk there is exactly
$1/(n+1)$, and every corrected risk has numerator at least one. The stop-all element is
therefore feasible at and above $1/(n+1)$, while the entire grid is infeasible below it,
which is the reported infeasibility case. The stop-all policy still has known zero loss below this floor; only the corrected selection criterion is infeasible.
\end{proof}

Dependence among records within an episode is unrestricted; exchangeability is required at
the episode level only.

\begin{proof}[Proof of Corollary~\ref{cor:fixed-corpus}]
Let $\mathcal F$ now contain the unordered corpus and frozen loss map. For each held-out position $j$, uniform episode permutation makes the $n$ calibration positions together with position $j$ exchangeable conditional on $\mathcal F$.
Theorem~\ref{thm:crc} bounds each expected held-out loss by $\alpha$, and linearity of
expectation gives the bound on their mean.
\end{proof}

\subsection{Replay equivalence}

\begin{proof}[Proof of Proposition~\ref{prop:replay}]
At the first decision both runs share the empty executed prefix, proposal, graph, selected
context, and score. If the proposal is rejected, both stop before executing it; if it is
accepted, the coupled action and environment randomness produce the same observation, so
the first executed records agree. Assume agreement through the first $t-1$ executed
records. Silence and the absence of wrapper side effects preserve the same agent state, so
coupling supplies the same proposal at decision $t$, and prefix measurability supplies the
same graph, context, score, and decision. Induction continues to the first rejection or to
natural termination. The ledger is deterministic on the common executed prefix, so ledger
state and loss agree as well.
\end{proof}

The proposition ceases to apply to a wrapper that edits the prompt before a decision,
mutates the environment before rejection, exposes different fields in replay, or resamples
uncoupled randomness.

\section{Algorithm, graph selector, and scorer}
\label{app:selector}

\subsection{Calibration algorithm}

Algorithm~\ref{alg:carb} applies to a frozen selector $C$. Substituting the
graph selector $C^{\mathrm G}$ yields \gcarb{}; substituting the complete
prefix yields Full-prefix CARB. The ledger always sees all permitted
executed records, irrespective of which records were selected for scoring.

\begin{algorithm}[ht]
\caption{CARB stopped-prefix calibration}
\label{alg:carb}
\begin{algorithmic}[1]
\Require frozen selector $C$, scorer $s$, ledger $\ell$, sanitization; episodes $Z_{1:n}$; finite grid $\Lambda\ni\lstop<0$; budget $\alpha$
\For{$\lambda\in\Lambda$ and $i=1,\ldots,n$}
  \State $P\gets\varnothing$
  \For{each proposal $A_t$ logged in $Z_i$}
    \State $C_t\gets C(P,A_t)$; \ $q_t\gets s(C_t)$ \Comment{no future result $O_t$}
    \State \textbf{if} $q_t>\lambda$ \textbf{then break} \Comment{stop before $A_t$}
    \State $P\gets P\,R_t$ \Comment{append the logged executed record}
  \EndFor
  \State $L_i(\lambda)\gets\ell(P)$ \Comment{earlier harm stays charged}
\EndFor
\State $\widehat R^+(\lambda)\gets(1+\sum_i L_i(\lambda))/(n+1)$ for each $\lambda\in\Lambda$
\State \Return $\max\{\lambda\in\Lambda:\widehat R^+(\lambda)\leq\alpha\}$, or report criterion infeasible
\end{algorithmic}
\end{algorithm}

\subsection{Prefix graph and source-to-sink selector}

At decision $t$, the graph contains one node for every executed record
$R_1,\ldots,R_{t-1}$ and one node for the proposal $A_t$.  An executed
node contains its action and visible result.  The current node contains
the proposed action and an empty result.  Future actions, future
observations, terminal outcomes, ledger labels, and suite labels are
excluded.

Node typing assigns five types: private sources, external sinks, financial actions, irreversible actions, and message/social actions. Types come from frozen tool metadata together with deterministic rules over tool names and visible outputs, all fixed before assignment and calibration. No typing rule reads suite labels or terminal outcomes.
Visible identifiers include e-mail addresses, IBANs, canary strings, and
values attached to object, source, event, message, channel, account,
transaction, file, recipient, or sender fields.  Generic identifiers
must contain at least four characters.  Distinctive text spans comprise
exact e-mail, IBAN, and canary strings, together with visible result
lines of at least 24 characters that contain a digit; such lines are
truncated to 80 characters.  A general exact-span match requires at
least 16 characters.

Every edge points from an earlier node to a later node.  An edge is added
when at least one of the following prefix-visible relations holds:
\begin{enumerate}
\item the records carry the same declared branch identifier;
\item they share an object identifier or a data identifier;
\item the later action explicitly declares as input a data identifier
produced earlier;
\item a canary observed earlier reappears in the later action arguments;
\item a later financial action shares a transaction or account
identifier with the earlier record;
\item a later irreversible action updates the same visible object;
\item a later message/social action shares a thread or message
identifier with the earlier record; or
\item exact distinctive content from the earlier result reappears in
the later visible action fields.
\end{enumerate}
The construction uses neither unconditional temporal edges nor learned
or embedding-based similarity.  Strict temporal orientation makes the
graph acyclic.

Our experiments instantiate the sink predicate as the frozen external-sink type, which includes financial sinks. An irreversible-only, non-external proposal uses the ancestor fallback.
The same fallback applies when no source-to-current path exists: the selector returns the current proposal
together with its closed ancestor set.
The current proposal is always retained, and selected nodes are
serialized in chronological order.  Full-prefix \carb{} instead returns
every graph node.  The equal-size random control retains the proposal
and a deterministic pseudorandom subset of earlier nodes whose size
matches the graph-local context at that decision.

These relations are observable matching rules, not verified semantic
dataflow. A shared identifier or exact span can introduce an irrelevant
edge, while paraphrasing, transformation, or unlogged state can remove a
needed one. The source-to-sink feature is evaluated on retained evidence;
Full-prefix and \gcarb{} use the same underlying graph machinery. Changing
the evidence changes scores and possibly harm, without redefining the
ledger in Appendix~\ref{app:ledger}.

\subsection{Frozen weighted scorer}
\label{app:scorer}

The experiments use
$q_t=\operatorname{clip}_{[0,1]}(w^\top x(C_t))$ with the complete nonzero
feature set in Table~\ref{tab:scorer}. We fixed the features and weights
before assignment and calibration. We interpret the experiments
conditional on this scorer and the fixed corpus, as in
Corollary~\ref{cor:fixed-corpus}.

\begin{table}[t]
\centering
\footnotesize
\setlength{\tabcolsep}{3.5pt}
\begin{tabular}{@{}lr@{}}
\toprule
Feature & Weight \\
\midrule
Intercept & .05 \\
Current private source & .09 \\
Current external sink & .25 \\
Current financial action & .35 \\
Current irreversible action & .24 \\
Current message/social action & .12 \\
Earlier selected private source & .10 \\
Earlier selected canary & .14 \\
Earlier selected injection text & .14 \\
Private source reaches current sink & .52 \\
Sink after selected private source, path absent & .16 \\
Canary in current arguments & .20 \\
Current risky action & .07 \\
\bottomrule
\end{tabular}
\caption{Frozen weighted scorer. A risky current action is external, financial, or
irreversible. The weighted sum is clipped to $[0,1]$; only a retained source-to-sink path
activates the $.52$ provenance feature.}
\label{tab:scorer}
\end{table}

The $.10$, $.14$, and $.14$ weights on earlier selected private sources,
canaries, and injection text can increase a score even when the selected
record is disconnected from the proposal. The $.52$ feature instead
requires a retained private-source-to-sink path. These choices explain
why pruning can improve this scorer. Selectivity with learned scorers
or alternative context weights remains an open comparison.

\subsection{Record-count scope}

Over $T$ decisions, serializing every prefix supplies
$\sum_{t=1}^{T}O(t)=O(T^2)$ records to the scorer. If a selected component
has at most $d$ records per decision, the corresponding count is
$O(Td)$. Both statements are conditional record-count bounds. They
exclude graph construction, maintenance, and other wrapper costs, and
require bounded record length to imply token or byte bounds.

\section{AgentDojo ledger, corpus, and preprocessing}
\label{app:ledger}

The fixed corpora contain $984$ ungated episodes each from
Qwen2.5-14B-Instruct and Qwen3-14B on AgentDojo
\citep{debenedetti2024agentdojo}. Our custom ledger is a deterministic
function of executed, prefix-visible records; it is not an official
AgentDojo harm annotation. Its five binary categories are privacy flow,
financial action, irreversible action, social action, and security event.
The reported episode loss is their union,
\[
\Lunion(P)=\max\{\ell_{\rm priv}(P),\ell_{\rm fin}(P),
\ell_{\rm irr}(P),\ell_{\rm soc}(P),\ell_{\rm sec}(P)\}.
\]
Each indicator records whether its event has occurred and remains one
afterward. Subsequent recovery does not remove a registered event.

\paragraph{Event registration.}
For injection-directed events, the adapter identifies an adversarial
objective from injection text only after that text appears in an
executed observation. It registers an event when the corresponding tool
call executes and its result confirms execution; the category follows
the frozen tool type. Privacy flow also registers when an exact canary
previously observed at a private source appears in the executed arguments
of an external or financial sink. Rejected proposals, failed calls,
future records, terminal benchmark outcomes, terminal fallback, and
private reads without an executed sink create no event. The recorded
corpora contain $169$ events in $130$ positive Qwen2.5 episodes and $257$
events in $221$ positive Qwen3 episodes. Event totals can exceed the
number of positive episodes; the binary union and executed-step baseline
count only whether, and when, the first union event occurs.

\paragraph{Policy and evaluation separation.}
Registered ledger labels and benchmark outcomes are excluded from the
graph and scorer. The scorer can use visible tool fields and observed
text as specified in Appendix~\ref{app:selector}; it cannot consume a
future event label. The ledger evaluates the complete executed prefix,
not just the selected context. The official attack evaluator is excluded
from scoring, calibration, threshold selection, and operating-point
selection. Its terminal outcomes and the prefix ledger measure different
objects; the reported risk target is the latter.

\paragraph{Preprocessing and validity limits.}
We rewrote canary tokens in $27$ Qwen2.5 trajectories during trace
preparation and made no equivalent substitution in Qwen3. We have not
quantified the effect of this preprocessing difference on ledger labels,
graph edges, or results. We have not independently audited the event
adapter against human annotations or benchmark outcomes. Exact-canary
detection need not capture paraphrased or transformed disclosure. The theorem bounds this declared operational ledger; it does
not establish its coverage of all harms or its semantic accuracy.

\section{Calibration, splits, metrics, and compute}
\label{app:protocol}

\subsection{Frozen pipeline and assignments}

The ledger, graph builder, selector, scorer, sanitizer, and grid are fixed
before assignment and calibration. The $102$-element grid is
\[
\Lambda=\{\lstop\}\cup\{0,.01,.02,\ldots,1\},\qquad\lstop=-1,
\]
and a proposal is rejected exactly when $q_t>\lambda$. Increasing the
threshold is more permissive. AgentDojo uses $200$ seeded episode
permutations, indexed $0$--$199$, with $492$ calibration and $492$ test
episodes per backbone. For each assignment, we sort eligible
episode or group identifiers and shuffle them with Python's Mersenne
Twister at base seed zero plus the split index. AgentDojo's statistical
interpretation is the uniform episode-assignment model in
Corollary~\ref{cor:fixed-corpus}; the $200$ seeded assignments summarize
sensitivity of this fixed corpus rather than $200$ independent datasets.

Table~\ref{tab:split-sizes} gives the pool and fold sizes. CES keeps all
trajectories with a common pair identifier in one fold; graph-braid uses
one complete trajectory as its assignment unit. Each CES clean split has
$48$ calibration and $48$ test trajectories; each near-miss split has $96$
of each. Graph-braid uses $600$ calibration and $600$ test trajectories.
Only the selected calibration and test folds enter each split's results.
We report the constructed results as descriptive held-out diagnostics:
the uniform episode-permutation corollary does not cover grouped CES.

\begin{table}[tb]
\centering
\small
\setlength{\tabcolsep}{5pt}
\begin{tabular}{lrrrrl}
\toprule
Dataset & Pool & Calibration & Test & Assignments & Assignment unit \\
\midrule
AgentDojo, each backbone & 984 & 492 & 492 & 200 & Episode \\
CES clean & 160 & 48 & 48 & 100 & Pair identifier \\
CES near-miss & 320 & 96 & 96 & 100 & Pair identifier \\
Graph-braid & 2,000 & 600 & 600 & 200 & Trajectory \\
\bottomrule
\end{tabular}
\caption{Pool and fold sizes, all in trajectories. CES uses split seeds
$0$--$99$; AgentDojo and the main graph-braid setting use $0$--$199$.
CES trajectories sharing a pair identifier stay together in one fold.}
\label{tab:split-sizes}
\end{table}

The equal-size random selector retains the proposal and a deterministic
pseudorandom subset of earlier nodes with the graph selector's size at
that decision. Its full budget grid uses selector seed $0$; a separate
check uses seeds $0$--$19$ at $\alpha=.05$. Selector seeds and assignment
seeds have distinct roles. For threshold replay, each episode--proposal
pair must retain the same selected context at every threshold; resampling
contexts across thresholds can destroy prefix nesting.

\subsection{Calibration targets and context controls}

\begin{table}[tb]
\centering
\small
\setlength{\tabcolsep}{4pt}
\begin{tabular}{lll}
\toprule
Method & Calibrated object & Scorer evidence \\
\midrule
Executed-step CRC & Rate envelope $\widetilde r_\lambda$ & Current proposal \\
Ledger-current & Stopped-prefix ledger $L_\lambda$ & Current proposal \\
Full-prefix CARB & Same ledger & Full prefix and proposal \\
\gcarb{} & Same ledger & Dependency-selected nodes \\
Fixed-window-4 & Same ledger & Four-record recency context \\
Equal-size random & Same ledger & Size-matched random context \\
\bottomrule
\end{tabular}
\caption{Internal calibration and context controls. No guard is
uncalibrated. Ledger-current is reported only on the constructed suites;
Executed-step CRC is reported only on AgentDojo, so these results do not
provide a same-context loss ablation. Executed-step CRC
is an internal analogue of step-level control, not a reproduced CORA
implementation.}
\label{tab:methods}
\end{table}

For a full ungated trajectory, write $P_t=(R_1,\ldots,R_t)$ and
$P_0=\varnothing$. The executed-step baseline uses the first binary
union-ledger increment
\[
h_t=\Lunion(P_t)-\Lunion(P_{t-1}),\qquad
r_\lambda(Z)=\frac{\sum_{t\leq|P_\lambda(Z)|}h_t}
{\max\{1,|P_\lambda(Z)|\}}.
\]
An empty permitted prefix gives rate zero. The numerator is at most one;
it is not the total count of category events. Because a raw rate can
fall as a harmless suffix grows, the calibration target is
\[
\widetilde r_\lambda(Z)=
\max_{\lambda'\in\Lambda:\lambda'\leq\lambda}r_{\lambda'}(Z).
\]
This envelope is nondecreasing in the more permissive threshold. The
baseline applies Equation~\ref{eq:calibration} with
$\widetilde r_\lambda$ in place of $L_\lambda$, whereas reported step
rates are the held-out raw $r_{\widehat\lambda}$.

\subsection{Reported metrics and denominators}

The held-out mean ledger loss is
\[
\bar L=\frac{1}{|\mathcal I_{\rm test}|}
\sum_{i\in\mathcal I_{\rm test}}
\Lunion(P_{\widehat\lambda}(Z_i)).
\]
Intervention $I$ is the fraction of test trajectories with a rejected
proposal. Autonomous success $U$ is the fraction whose logged task
completes before handoff; task completion and a ledger event can coexist.
No joint harm-free success rate is reported.

Harm stop is the fraction of harmful sinks prevented from executing,
including prevention by an earlier rejection. CES benign stop divides
stopped non-leak trajectories by all held-out non-leak trajectories.
Graph-braid benign stop instead divides trajectories whose first
rejection is on a disconnected benign branch by all held-out trajectories.
These benign-stop metrics have different denominators and meanings.

For method $m$, let $\mathcal J_m$ contain the episode--proposal pairs it
scores through first rejection (including the rejected proposal) or
natural termination. With the proposal counted as one node,
\begin{equation}
\mathrm{CtxRed}_m
=1-\frac{\sum_{(i,t)\in\mathcal J_m}|C^m_{it}|}
{\sum_{(i,t)\in\mathcal J_m}|C^{\rm full}_{it}|}.
\label{eq:ctxred}
\end{equation}
The denominator is the counterfactual full-prefix record count on that
same set of proposals. In the main \gcarb{} results, the comparison set is
$\mathcal J_{\mathrm G}$, not the proposals reached by a separately
calibrated Full-prefix policy. Per-decision size matching does not force
equal aggregate CtxRed for random and graph selection: their stopping
points, and hence their $\mathcal J_m$, can differ. This is a serialized
record-count measure, not tokens, latency, memory, or total computation.

Means and empirical 5th--95th percentiles in the AgentDojo results
summarize assignment sensitivity on the reused corpus. Paired differences
use the same test assignment for both methods. These percentiles are not
independent-sample confidence intervals or high-probability guarantees
for future deployments.

\subsection{Computing environment and code}

We ran all experiments on the same dual-socket AMD EPYC 7402 host with
$48$ physical cores, 1.0~TB RAM, two NVIDIA A100 80GB GPUs, and Linux 5.14
on x86-64. Replay and postprocessing used Python 3.9.25, PyYAML 5.4.1,
NumPy 1.23.5, and pandas 2.3.2; AgentDojo is pinned at
v0.1.35-4-g18b501a63. The closed-loop check used
vLLM 0.21.0, bfloat16, tensor parallelism $2$, maximum sequence length
$8192$, and seed $0$. Calibration replays logged episodes with the frozen
rule and requires no additional language-model inference. We do not
report execution times or measured wrapper overhead.

Code accompanies the OpenReview submission. The supplement reconstructs
the prefix-visible policy view, event ledger, graph contexts, and stopped
prefixes from the ungated records.

AgentDojo is distributed under the
\href{https://github.com/ethz-spylab/agentdojo/blob/main/LICENSE}{MIT license}.
The \href{https://huggingface.co/Qwen/Qwen2.5-14B-Instruct/blob/main/LICENSE}{Qwen2.5-14B-Instruct}
and \href{https://huggingface.co/Qwen/Qwen3-14B/blob/main/LICENSE}{Qwen3-14B}
backbones are distributed under Apache License 2.0.

\section{Additional AgentDojo results}
\label{app:agentdojo}

\subsection{Selected operating points}

\begin{table}[tb]
\centering
\small
\setlength{\tabcolsep}{2.4pt}
\begin{tabular}{llcc ccc ccc ccc}
\toprule
& &
\multicolumn{2}{c}{Executed-step CRC} &
\multicolumn{3}{c}{Full-prefix} &
\multicolumn{3}{c}{Equal-size random} &
\multicolumn{3}{c}{\gcarb{}} \\
\cmidrule(lr){3-4}
\cmidrule(lr){5-7}
\cmidrule(lr){8-10}
\cmidrule(lr){11-13}
Backbone & $\alpha$
& $r\downarrow$ & $\bar L\downarrow$
& $\bar L\downarrow$ & $I\downarrow$ & $U\uparrow$
& $\bar L\downarrow$ & $I\downarrow$ & $U\uparrow$
& $\bar L\downarrow$ & $I\downarrow$ & $U\uparrow$ \\
\midrule
\multirow{2}{*}{Qwen2.5}
& .050
& .027 & .133
& .044 & .680 & .065
& .035 & .515 & .168
& .035 & .516 & .167 \\
& .100
& .027 & .133
& .072 & .406 & .222
& .077 & .332 & .261
& .077 & .351 & .250 \\
\midrule
\multirow{2}{*}{Qwen3}
& .050
& .034 & .154
& .046 & .989 & .001
& .041 & .992 & .002
& .041 & .992 & .002 \\
& .100
& .044 & .225
& .092 & .826 & .025
& .085 & .535 & .107
& .085 & .537 & .106 \\
\bottomrule
\end{tabular}
\caption{\textbf{AgentDojo results averaged over $200$ calibration/test assignments.} Executed-step CRC calibrates the rate envelope $\widetilde r_\lambda$ and reports raw rate $r$. Full-prefix, equal-size random, and \gcarb{} calibrate the same stopped-prefix ledger with different scorer evidence. $\bar L$, $I$, $U$: mean episode loss, intervention, autonomous success. Both restricted-context selectors improve utility over Full-prefix at the displayed points; Qwen3 at $.05$ is near stop-all.}
\label{tab:agentdojo}
\end{table}

At $\alpha=.05$ and $.10$, respectively, \gcarb{}'s CtxRed is
$.565$ and $.597$ for Qwen2.5, and $.511$ and $.590$ for Qwen3.
The no-guard mean episode losses are $.133$ and $.225$ for the two
backbones. Context counts use Equation~\ref{eq:ctxred}.

\subsection{Thresholds and feasibility}

Table~\ref{tab:thresholds} gives the complete four-budget grid for
Full-prefix and \gcarb{}. All $3{,}200$ recorded calibration selections
are feasible and satisfy their own corrected calibration criterion.
This does not assert that every held-out assignment meets its budget.
Numerically positive thresholds can still cause behavioral stop-all.

\begin{table}[t]
\centering
\small
\setlength{\tabcolsep}{5pt}
\begin{tabular}{llccccccc}
\toprule
& & \multicolumn{3}{c}{Full-prefix threshold} &
\multicolumn{3}{c}{\gcarb{} threshold} & \\
\cmidrule(lr){3-5}\cmidrule(lr){6-8}
Backbone & $\alpha$ & Min & Median & Max & Min & Median & Max &
Feasible per method \\
\midrule
\multirow{4}{*}{Qwen2.5}
& .025 & .24 & .54 & .84 & .13 & .44 & .58 & 200/200 \\
& .050 & .58 & .84 & .98 & .44 & .58 & .72 & 200/200 \\
& .100 & .99 & .99 & .99 & .99 & .99 & .99 & 200/200 \\
& .200 & 1.00 & 1.00 & 1.00 & 1.00 & 1.00 & 1.00 & 200/200 \\
\midrule
\multirow{4}{*}{Qwen3}
& .025 & .24 & .24 & .35 & .13 & .13 & .25 & 200/200 \\
& .050 & .24 & .37 & .54 & .13 & .25 & .44 & 200/200 \\
& .100 & .54 & .58 & .84 & .44 & .58 & .58 & 200/200 \\
& .200 & .99 & .99 & 1.00 & .99 & .99 & 1.00 & 200/200 \\
\bottomrule
\end{tabular}
\caption{Selected-threshold ranges over 200 assignments per cell
($3{,}200$ calibrations total).  All corrected calibration risks are at
most their displayed budget; the range shows that assignment matters
most at the tighter operating points.}
\label{tab:thresholds}
\end{table}

\subsection{Paired assignment sensitivity and endpoints}

Table~\ref{tab:paired-sensitivity} reports \gcarb{} minus Full-prefix
on shared assignments. Reported paired differences need not equal
subtraction of the rounded entries in Table~\ref{tab:agentdojo}.
Rounding can make these two quantities differ. In particular, the
reported Qwen3 $.05$ utility difference rounds to zero.
At $.025$, Qwen2.5 autonomous success is $.0078$ for \gcarb{} and $.0098$
for Full-prefix, a small reversal; Qwen3 is behaviorally stop-all. At
$.20$, both Qwen2.5 selectors choose threshold $1$ and match no guard,
with $U=.436$. Qwen3 still has gated behavior at $.20$, with $U=.189$
for \gcarb{} and $.162$ for Full-prefix; this is not a no-guard reference.
We report autonomous success as an absolute fraction of test episodes,
without normalizing it by no-guard success.

\begin{table}[t]
\centering
\footnotesize
\setlength{\tabcolsep}{3.5pt}
\begin{tabular}{llrrr}
\toprule
Backbone & $\alpha$ & $\Delta\bar L$ & $\Delta I$ & $\Delta U$ \\
\midrule
Qwen2.5 & .05 & $-.009$ & $-.163$ & $+.102$ \\
 &  & $[-.016,.004]$ & $[-.280,-.102]$ & $[.063,.171]$ \\
Qwen2.5 & .10 & $+.005$ & $-.055$ & $+.029$ \\
 &  & $[.002,.008]$ & $[-.067,-.043]$ & $[.018,.037]$ \\
Qwen3 & .05 & $-.005$ & $+.003$ & $+.000$ \\
 &  & $[-.033,.004]$ & $[-.004,.053]$ & $[-.002,.002]$ \\
Qwen3 & .10 & $-.007$ & $-.289$ & $+.081$ \\
 &  & $[-.024,.016]$ & $[-.427,-.081]$ & $[.030,.118]$ \\
\bottomrule
\end{tabular}
\caption{Paired mean differences and empirical 5th--95th assignment percentiles
over $200$ shared splits per cell. Intervention is lower in three cells,
while the Qwen3 $.05$ cell has essentially no utility difference. These
are assignment-sensitivity summaries, not independent-sample intervals.}
\label{tab:paired-sensitivity}
\end{table}

\subsection{Complete window and random-context controls}

\begin{table}[t]
\centering
\scriptsize
\setlength{\tabcolsep}{3.4pt}
\begin{tabular}{llrrrrrrrr}
\toprule
& & \multicolumn{4}{c}{Fixed-window-4} &
\multicolumn{4}{c}{Equal-size random context} \\
\cmidrule(lr){3-6}\cmidrule(lr){7-10}
Backbone & $\alpha$ & $\bar L$ & $I$ & $U$ & CtxRed &
$\bar L$ & $I$ & $U$ & CtxRed \\
\midrule
\multirow{4}{*}{Qwen2.5}
& .025 & .022 & .869 & .010 & .130 & .022 & .977 & .008 & .494 \\
& .050 & .044 & .669 & .072 & .159 & .035 & .515 & .168 & .565 \\
& .100 & .072 & .395 & .228 & .175 & .077 & .332 & .261 & .596 \\
& .200 & .133 & .000 & .436 & .201 & .133 & .000 & .436 & .580 \\
\midrule
\multirow{4}{*}{Qwen3}
& .025 & .001 & 1.000 & .000 & .003 & .000 & 1.000 & .000 & .089 \\
& .050 & .046 & .989 & .002 & .133 & .041 & .992 & .002 & .511 \\
& .100 & .092 & .818 & .028 & .188 & .085 & .535 & .107 & .590 \\
& .200 & .149 & .369 & .176 & .242 & .159 & .320 & .196 & .646 \\
\bottomrule
\end{tabular}
\caption{Complete context-control grid, averaged over $200$ assignments per cell.
Equal-size random context performs similarly to \gcarb{} on this natural
corpus. Fixed-window-4 uses a different recency-based selection rule.
CtxRed is computed separately on each method's scored proposals using
Equation~\ref{eq:ctxred}.}
\label{tab:context-controls}
\end{table}

Across random-selector seeds $0$--$19$ at $\alpha=.05$, mean random-context
ledger loss is $.035$ for Qwen2.5 and $.041$ for Qwen3, equal to the
corresponding \gcarb{} losses at displayed precision. Random-minus-\gcarb{}
intervention differences range from $-.003$ to $.000$ for Qwen2.5 and
are zero for every Qwen3 seed. These comparisons support context
restriction under the tested scorer without a demonstrated natural-data
advantage for dependency selection. Figure~\ref{fig:frontiers} in the
main text shows the full operating grid.

\section{Constructed suites and held-out diagnostics}
\label{app:suites}

\subsection{Construction}

CES trajectories open with nine neutral filler actions. A harmful
trajectory then reads a private record, summarizes it while carrying its
private handle, and sends the summary and handle to a fixed external
recipient. The clean paired trajectory replaces the private handle with
a public one and performs the same summarize-and-send pattern. Each
harmful trajectory has three near-misses: a public-only send; a private
read and local summary without an external sink; and a private branch
followed by a disconnected public branch whose sink sends only the public
handle. The clean cohort is the harmful/public-send subset of the broader
near-miss cohort: $80$ leaks and $80$ matched non-leaks. The near-miss
cohort contains those same $80$ leaks and all $240$ non-leaks. The cohorts
are therefore not independent replications.

Graph-braid samples $2{,}000$ harm indicators from a seeded Bernoulli
recipe with probability $.4$ and creates a fresh private canary and
private/public object pair for each trajectory. Each trajectory has $20$
actions. Positions 1--3 read the private record, open a benign branch,
and continue private reasoning; position 5 sends an authorized benign
update. The other positions interleave private and benign records with
branch identifiers. A harmful trajectory sends the private canary
externally at position 16; a benign trajectory continues its branch there.
The cohort prevalence is set by construction, not estimated for deployed
agents.

Harm and branch labels come from the generation recipe and are stored
outside every graph and scorer input. Appendix~\ref{app:protocol} states
the fold sizes: $100$ pair-grouped assignments for each CES cohort and $200$ trajectory assignments for
graph-braid. These results describe selectivity of the frozen rule on
these constructed traces.

\subsection{Results and ties}

\begin{table}[t]
\centering
\scriptsize
\setlength{\tabcolsep}{3.1pt}
\begin{tabular}{llrrrrrr}
\toprule
Cohort & Method & $\bar L$ & Harm stop & Benign stop & $I$ & $U$ &
CtxRed \\
\midrule
\multirow{6}{*}{CES clean: 80/80}
& No guard & .500 & .000 & .000 & .000 & 1.000 & -- \\
& Ledger-current & .000 & 1.000 & 1.000 & 1.000 & .000 & .846 \\
& Full-prefix & .000 & 1.000 & .000 & .500 & .500 & .000 \\
& Fixed-window-4 & .000 & 1.000 & .000 & .500 & .500 & .462 \\
& Equal-size random & .000 & 1.000 & 1.000 & 1.000 & .000 & .346 \\
& \gcarb{} & .000 & 1.000 & .000 & .500 & .500 & .346 \\
\midrule
\multirow{6}{*}{CES near-miss: 80/240}
& No guard & .250 & .000 & .000 & .000 & 1.000 & -- \\
& Ledger-current & .000 & 1.000 & .667 & .750 & .250 & .853 \\
& Full-prefix & .000 & 1.000 & .333 & .500 & .500 & .000 \\
& Fixed-window-4 & .000 & 1.000 & .333 & .500 & .500 & .481 \\
& Equal-size random & .000 & 1.000 & .667 & .750 & .250 & .395 \\
& \gcarb{} & .000 & 1.000 & .000 & .250 & .750 & .395 \\
\midrule
\multirow{6}{*}{Graph-braid: 2,000}
& No guard & .390 & .000 & .000 & .000 & 1.000 & -- \\
& Ledger-current & .000 & 1.000 & .000 & .390 & .610 & .898 \\
& Full-prefix & .000 & 1.000 & .095 & 1.000 & .000 & .000 \\
& Fixed-window-4 & .000 & 1.000 & .095 & 1.000 & .000 & .067 \\
& Equal-size random & .000 & 1.000 & .079 & .905 & .095 & .450 \\
& \gcarb{} & .000 & 1.000 & .000 & .390 & .610 & .471 \\
\bottomrule
\end{tabular}
\caption{Constructed diagnostics at $\alpha=.10$: $100$ grouped CES assignments
and $200$ graph-braid assignments. CES labels give harmful/non-leak pool
counts; the graph-braid label gives total pool size. All gated methods
have zero reported mean ledger loss and stop the harmful sinks. Benign
stop divides by non-leak trajectories in CES, but by all trajectories
in graph-braid (Appendix~\ref{app:protocol}). These are descriptive
held-out results, not a CES certificate under the episode-permutation
corollary.}
\label{tab:constructed}
\end{table}

On CES clean, Full-prefix and Fixed-window-4 match \gcarb{} on loss,
intervention, and utility. On near-misses, \gcarb{} avoids benign stops
where Full-prefix and Fixed-window-4 stop one third and Ledger-current
and equal-size random stop two thirds. The equal-size comparison supports
a role for which records are retained beyond their count in these
constructed cohorts under this scorer.

On graph-braid, Ledger-current matches \gcarb{} on loss, intervention,
and utility. Equal-size random has benign-branch first stops on $.079$
of all trajectories and intervention $.905$, compared with $.000$ and
$.390$ for \gcarb{}. Random and \gcarb{} are size-matched at a given
decision but have CtxRed $.450$ and $.471$: different stopping points
change the scored-proposal sets in Equation~\ref{eq:ctxred}. These ties
and differences limit the claim to the comparisons actually distinguished
by each construction.

\section{Observable-evidence and perturbation diagnostics}
\label{app:observability}

\subsection{Fixed-threshold evidence removal}

The $2\times2$ audit reuses $80$ matched CES pairs and $100$ grouped
assignments, crossing structured provenance present/absent with exact
sink text present/laundered. In each assignment, the threshold is
calibrated on the provenance-present, text-laundered cell and then held
fixed across all four cells. Evidence variants alter the graph/scorer
view while the harm labels and declared evaluation target stay fixed.

\begin{table}[t]
\centering
\footnotesize
\setlength{\tabcolsep}{4pt}
\begin{tabular}{lrrrr}
\toprule
Evidence & $\bar L$ & Path recall & False path & CtxRed \\
\midrule
Provenance $+$, text $+$ & .000 & 1.000 & .000 & .346 \\
Provenance $+$, text $-$ & .000 & 1.000 & .000 & .346 \\
Provenance $-$, text $+$ & .000 & 1.000 & .000 & .833 \\
Provenance $-$, text $-$ & .500 & .000 & .000 & .846 \\
\bottomrule
\end{tabular}
\caption{Fixed-threshold CES evidence audit. ``Text $-$'' denotes laundered exact
sink text and ``provenance $-$'' removal of structured provenance from
the policy evidence. The threshold is calibrated in the provenance $+$,
text $-$ cell and reused. Removing both channels gives zero harmful-path
recall and mean ledger loss $.500$, equal to the no-guard loss of the
balanced cohort. This is an observability/shift diagnostic, not a
four-cell conformal certificate.}
\label{tab:evidence}
\end{table}

Path recall is measured on harmful source-to-sink examples, and false
path on the matched non-leak examples. Either remaining observable
channel retains all harmful paths in this construction; absence of both
removes them. Keeping the ledger fixed preserves the evaluation target,
while changed evidence can still increase the loss of a frozen policy.
Figure~\ref{fig:appendix-observability}A displays the path-recall results.

\subsection{Graph-braid perturbations with recalibration}
\label{app:perturbations}

The graph-braid sweep varies harm prevalence, benign-branch density,
additive score noise, and observable-edge dropout at $\alpha=.10$.
Each condition receives its own calibrated threshold against the same
global ledger. This measures the risk--intervention trade-off after
recalibration. The CES audit above instead reuses a fixed threshold
across evidence conditions.

The sweep contains $16$ conditions with $20$ trajectory assignments per
condition. For condition $c\in\{0,\ldots,15\}$, the split seeds are
$20260609+100c$ through $20260609+100c+19$. The listed prevalence levels
are $.1,.2,.6,.8$; benign-branch densities are $.25,.75,.90$; score-noise
levels are $.05,.10,.20$; and edge-dropout levels are $.05,.10,.20$.
Two further conditions combine moderate and severe perturbations.
We report the sweep as a descriptive sensitivity analysis of the
constructed traces, with separate calibration in each condition.

\begin{table}[t]
\centering
\scriptsize
\setlength{\tabcolsep}{2pt}
\begin{tabular}{lrrrrr}
\toprule
Sweep & Conditions & Max $\bar L$ & Min H-stop & Max B-stop &
CtxRed range \\
\midrule
Baseline & 1 & .000 & 1.000 & .000 & .471--.471 \\
Harm prevalence & 4 & .054 & .400 & .000 & .471--.472 \\
Benign density & 3 & .000 & 1.000 & .000 & .471--.471 \\
Score noise & 3 & .097 & .765 & .030 & .469--.472 \\
Edge dropout & 3 & .000 & 1.000 & .094 & .400--.471 \\
Combined stress & 2 & .080 & .869 & .063 & .459--.467 \\
\bottomrule
\end{tabular}
\caption{Graph-braid sensitivity at $\alpha=.10$. Each condition's
metrics are averaged over $20$ assignments; each row gives extrema over
the listed conditions. H-stop is the fraction of harmful sinks blocked.
B-stop counts first stops on benign branches, divided by all trajectories.
The reported mean losses stay below the budget; these assignment
averages do not guarantee that every held-out batch meets it.}
\label{tab:robustness}
\end{table}

Figure~\ref{fig:appendix-observability}B shows the loss and intervention
means. At $20\%$ edge dropout, intervention reaches $1.0$: every episode
is stopped. This illustrates the possible utility cost of recalibration
when dependency evidence is degraded. The reported standard errors
summarize variation over repeated assignments of constructed data;
they provide no confidence guarantee for a new deployment population.
The sweep leaves fixed-threshold transfer under distribution shift
untested.

\begin{figure}[t]
\centering
\safeincludegraphics[width=\linewidth]{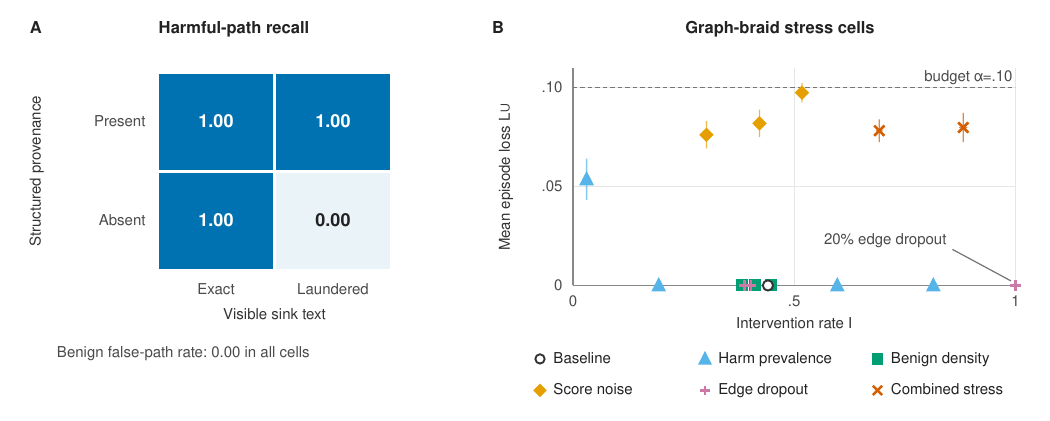}
\caption{\textbf{Evidence removal and recalibrated perturbation results.}
\textbf{A:} Harmful-path recall over $100$ grouped CES assignments of
$80$ matched pairs. The threshold is calibrated with structured
provenance present and exact sink text laundered, then reused in all
four cells. Removing both channels reduces recall to zero; the benign
false-path rate is zero throughout. Table~\ref{tab:evidence} gives ledger
losses. \textbf{B:} Mean episode loss and intervention for $16$ graph-braid
conditions at budget $\alpha=.10$, with separate calibration in each
condition. Whiskers show standard errors over $20$ assignments per
condition. The $20\%$ edge-dropout condition stops every episode.
These recalibrated comparisons do not test transfer of a fixed threshold
across conditions.}
\label{fig:appendix-observability}
\end{figure}

\section{Online interface check}
\label{app:integration}

Our Qwen2.5 integration check applies the split-$0$ \gcarb{}
threshold $.58$ at $\alpha=.05$ across four AgentDojo suites. Its primary
scope contains $48$ attacked and $16$ no-attack episodes. Online scores
and decisions agree with deterministic same-code recomputation on all
$135$ primary proposals, or $152$ proposals after $12$ auxiliary controls
are included. Every current node has an empty result at scoring time.
The wrapper stops $22/48$ attacked and $7/16$ no-attack episodes;
autonomous success is $13/48$ and $4/16$, respectively, and one attacked
episode reaches a registered ledger event.

These counts check consistency between the online interface and same-code
recomputation. The selected episodes do not establish exchangeability
with a deployment population, so this is an integration diagnostic.
Independent ledger validation and population risk evaluation require
separate evidence.

\section{Ethics and broader impacts}
\label{app:ethics}

The intended benefit is to make an agent gate's risk target explicit and
reduce avoidable interventions while controlling declared harm in
expectation. Misinterpreting this marginal guarantee as a promise for
every episode, or treating an incomplete ledger as comprehensive safety
coverage, could encourage unsafe deployment. Unnecessary stops can also
disrupt useful work and transfer burdens to human operators; our
experiments do not evaluate outcomes after handoff. All experiments use
a sandboxed benchmark with synthetic records and canary strings, without
real private data. Deployment requires validating the ledger and evidence
channels for the intended setting and evaluating the handoff process.

\end{document}